\documentclass[a4paper, 11pt]{article}

\usepackage{natbib}
\usepackage[T1]{fontenc}
\usepackage{graphicx}
\usepackage{booktabs}
\usepackage[misc]{ifsym}
\usepackage{amsmath}
\usepackage{amsthm}
\usepackage{amssymb}

\usepackage{amsmath,amsfonts, bm}

\def\eqref#1{equation~\ref{#1}}

\def\1{\bm{1}}

\def\vu{{\bm{u}}}
\def\vv{{\bm{v}}}

\def\vx{{\bm{x}}}
\def\vy{{\bm{y}}}
\def\vz{{\bm{z}}}

\def\mW{{\bm{W}}}
\def\mX{{\bm{X}}}
\def\mY{{\bm{Y}}}
\def\mZ{{\bm{Z}}}

\DeclareMathAlphabet{\mathsfit}{\encodingdefault}{\sfdefault}{m}{sl}
\SetMathAlphabet{\mathsfit}{bold}{\encodingdefault}{\sfdefault}{bx}{n}

\def\sR{{\mathbb{R}}}

\usepackage{algorithm}
\usepackage{algorithmic}
\newtheorem{theorem}{Theorem}[section]
\newtheorem{proposition}[theorem]{Proposition}

\newtheorem{remark}[theorem]{Remark}
\usepackage{tikz}
\usetikzlibrary{positioning,arrows.meta}

\begin{document}

\title{Two-Stage Clipping: Linear-Time Approximate and Exact Mixed-Norm Projections for Transformers}

\author{ Michel Barlaud$^1 $, Jean-Paul Marmorat$^2$ \\
$^1 $ I3S Laboratory, CNRS, Universite Côte d'Azur, France \\
$^2 $ Ecole des Mines, PSL, France.\\
}

\maketitle

\begin{abstract}
Large Language Models (LLMs) based on Transformer architectures have
achieved remarkable performance across artificial intelligence tasks,
but their computational cost remains a major bottleneck.
Structured sparsification provides a promising approach to reducing this
cost. In particular, $\ell_{1,\infty}$ projections can induce
column-wise sparsity in model matrices, but existing proximal methods
(Prox) typically require $\mathcal{O}(nm\log(nm))$ operations for
matrices in $\mathbb{R}^{n\times m}$, limiting their computational and
energy efficiency.

In this paper, we introduce Two-Stage Clipping (TSC), an efficient
structured sparsification operator inspired by bilevel optimization.
Rather than solving a bilevel optimization problem in the strict sense,
TSC decomposes the computation into two successive stages. First,
group-level quantities are projected to allocate the global sparsity
budget and identify active groups. Second, the resulting values are used
as group-wise clipping thresholds for the matrix coefficients.
For square attention matrices ($n \times n$),
the complexity of our TSC method is $O(n^2)$, which is linear in the
number of matrix coefficients.

We establish a theoretical bound on the discrepancy between TSC and the
exact proximal projection and show on synthetic data that the additional
Euclidean approximation cost remains small in practice. Extensive
benchmarks demonstrate substantial runtime improvements over proximal
baselines and validate the theoretical norm identities underlying the
proposed construction. 

We derive exact mixed-norm decomposition identities for both the
proximal projection and TSC operators. 
These identities establish an exact norm conservation
and decomposition property shared by the classical
Euclidean projection and the proposed two-stage
clipping operator.

We further extend the two-stage framework to the
$\ell_{1,2}$ structured sparsity constraint. Remarkably, in this
case, we prove that the two-stage construction coincides exactly
with the Euclidean projection onto the $\ell_{1,2}$ ball.
Consequently, the proposed framework provides two complementary
situations: for $\ell_{1,\infty}$, TSC is a computationally efficient
approximation of the exact proximal projection with a quantitative
error bound, whereas for $\ell_{1,2}$, TSC recovers the exact
Euclidean projection.

Experiments on biomedical datasets show
that TSC achieves classification accuracy and sparsity levels comparable
to those of the exact proximal approach when applied to Transformer
attention matrices.
Experiments on Image dataset show
that TSC achieves better PSNR than classical Pruning methods when applied to Transformer
attention matrices.

Overall, TSC provides a computationally efficient alternative to exact
proximal projection for structured sparsification, combining linear
complexity, theoretical guarantees, and competitive empirical
performance.

\end{abstract}
\section{Introduction}

Modern deep neural networks achieve state-of-the-art performance across vision, language, and biomedical applications, but this success comes at the cost of massive parameterization and significant computational overhead. As models continue to scale, enforcing structured sparsity has become a central challenge for improving efficiency, interpretability, and deployability \cite{Alv2016,Han2015-2,Sanh,double,double2}. These approaches yield empirical sparsity but do not enforce explicit geometric constraints. Unstructured sparsity induced by $\ell_1$ regularization often fails to translate into real computational gains on modern hardware \cite{tRS,sparse},\cite{duchi,condat}. \\

Among structured sparsity-inducing norms, the $\ell_{1,\infty}$ norm plays a central role \cite{bejar2021fastest}. It promotes column-wise sparsity by coupling entries within columns while encouraging entire groups to vanish. However, projection onto the $\ell_{1,\infty}$ ball remains computationally expensive. State-of-the-art projection algorithms rely on proximal formulations and the Moreau identity, leading to a worst-case complexity of $\mathcal{O}(nm \log(nm))$ for matrices in $\mathbb{R}^{n\times m}$ due to sorting operations across matrix entries. This complexity becomes prohibitive for large-scale neural networks and Transformer architectures.\\
The authors in \cite{ICTAI} improved the proximal algorithm for computing the exact Euclidean projection onto the $\ell_{1,\infty}$ ball, achieving near-linear complexity $O(nm+J\log(nm))$, where $J$ decreases as sparsity increases. However,  the main drawback is that its computational complexity remains sparsity-dependent.\\

Recently, bilevel optimization has become popular because it provides a powerful framework for modeling problems involving nested objective functions \cite{overview,Bilevel-pruning,BLO}. Bilevel optimization (BLO) involves two levels of hierarchy (i.e., upper and lower levels), where obtaining the solution to the upper-level problem requires solving the lower-level problem.\\
Inspired by this bilevel framework, we propose a two-stage clipping (TSC) operator that replaces the proximal Euclidean projection.

Transformer architectures \cite{Transformer} were 
proposed for learned image compression Liu et al.~\cite{Liu2023TCM}.
Of particular relevance to the work in \cite{Bigbird}
recently investigated sparse attention. In their work  sparsity is induced
by the attention architecture itself.

These works motivate the evaluation of the proposed TSC operators for
Biomedical Transformer based classification and Transformer-based image compression. In our experiments, attention matrices
are sparsified using TSC-$\ell_{1,\infty}$, TSC-$\ell_{1,2}$, and the
corresponding Euclidean proximal projection. The resulting methods are
evaluated on Classification of biomedical datasets.
We provide attention Sparsity–Reconstruction Trade-off on large image datasets.
In particular, we report Delta-PSNR  as
function of the attention sparsity.

\paragraph{Contributions.}
The main contributions of this work are:
\begin{itemize}
    \item New two-stage clipping (TSC) operators for the $\ell_{1,\infty}$ and $\ell_{1,2}$ structured norms.
    \item A single-level convex characterization of the lower-stage clipping operator for the radius vector generated by the upper-stage $\ell_1$ projection.
    \item New norm identities for the TSC $\ell_{1,\infty}$ and $\ell_{1,2}$ structured operators.
    \item A bound on the difference between the PROX and TSC approaches for $\ell_{1,\infty}$, together with a proof that TSC $\ell_{1,2}$ coincides exactly with the Euclidean projection.
    \item Empirical validation on Transformer architectures demonstrating improved computational efficiency and structured sparsity.
\end{itemize}

\section{Background on the $\ell_{1,\infty}$ projection
via proximal methods}

\paragraph{Notation.}
Throughout this paper, we follow the notation in
\footnote{https://www.deeplearningbook.org/contents/notation.html}.
For vectors, $\|\cdot\|_2$, $\|\cdot\|_1$, and
$\|\cdot\|_\infty$ denote the standard Euclidean,
$\ell_1$, and $\ell_\infty$ norms, respectively,
while $\|\cdot\|_{\mathrm F}$ denotes the
Frobenius norm of a matrix.

Projection onto the $\ell_{1,\infty}$ norm ball
is a standard tool for enforcing structured
sparsity \cite{quattoni2009efficient,bejar2021fastest}.

For a matrix $\mY\in\sR^{n\times m}$,
the mixed $\ell_{1,\infty}$ norm is defined as
\begin{equation}
\|\mY\|_{1,\infty}
=
\sum_{j=1}^{m}
\max_{1\leq i\leq n}|Y_{ij}|.
\end{equation}

For a given radius $\eta>0$, the corresponding
convex ball is
\begin{equation}
\mathcal B_{\eta}^{1,\infty}
=
\left\{
\mX\in\sR^{n\times m}:
\|\mX\|_{1,\infty}\leq\eta
\right\}.
\end{equation}

The Euclidean projection of $\mY$ onto this ball
is defined by
\begin{equation}
\label{Proj}
P_{\eta}^{1,\infty}(\mY)
=
\arg\min_{\mX\in\mathcal B_{\eta}^{1,\infty}}
\frac12\|\mX-\mY\|_{\mathrm F}^{2}.
\end{equation}

The dual norm of $\ell_{1,\infty}$ is
$\ell_{\infty,1}$, defined as
\begin{equation}
\|\mY\|_{\infty,1}
=
\max_{1\leq i\leq n}
\sum_{j=1}^{m}|Y_{ij}|.
\end{equation}

The proximity operator associated with the
dual norm is defined by
\begin{equation}
\label{eq:prox_infty_1}
\operatorname{prox}_{\eta\|\cdot\|_{\infty,1}}(\mY)
=
\arg\min_{\mX\in\sR^{n\times m}}
\left\{
\frac12\|\mX-\mY\|_{\mathrm F}^{2}
+
\eta\|\mX\|_{\infty,1}
\right\}.
\end{equation}

Therefore, Moreau's decomposition
\cite{moreau62,Combettes} yields
\begin{equation}
\label{eq:moreau_l1infty}
\boxed{
P_{\eta}^{1,\infty}(\mY)
=
\mY-
\operatorname{prox}_{\eta\|\cdot\|_{\infty,1}}(\mY).
}
\end{equation}

Thus, the same parameter $\eta$ defines both
the projection radius and the regularization
weight in the dual proximal operator. Efficient algorithms for computing the exact
$\ell_{1,\infty}$ projection are described in
\cite{bejar2021fastest}, with further
computational improvements reported in
\cite{ICTAI}. The proximal projection algorithm is provided in appendix \ref{Bejar}.

\section{A new two-stage $\ell_{1,\infty}$ structured clipping operator}
\label{TSC-1Infty}

\subsection{A single-level convex formulation}
In this paper, we propose a two-stage approach inspired by bilevel optimization methods, rather than a bilevel optimization problem in the strict sense \cite{overview,Bilevel-pruning,BLO}. In the first stage, the group-level quantities are projected to identify the active groups, while in the second stage, the resulting projected values are used to constrain or rescale the coefficients within each group.
We propose these two-stage method for enforcing
$\ell_{1,\infty}$-structured sparsity.
We provide a single-level convex characterization of the proposed two-stage operator.\\
Let \(Y\in\mathbb{R}^{n\times m}\) with columns
\(y_1,\dots,y_m\in\mathbb{R}^n\). Define the vector of column \(\ell_\infty\) norms
\begin{equation}
v \;:=\; (\|y_1\|_\infty,\dots,\|y_m\|_\infty)\in\mathbb{R}^m.
\label{eq:def_v}
\end{equation}
Let define the specific two-stage optimization problem as follows:
\begin{equation}
\hat u \;:=\; P_1^\eta(v)
\;\in\;
\arg\min_{u:\ \|u\|_1\le \eta,\ u\ge 0}
\frac12\|u-v\|_2^2,
\label{eq:def_u_hat}
\end{equation}
and define the two-stage operator \(TSC_{1,\infty}^\eta(Y)\) column-wise as
\begin{equation}
x_j \;:=\; P_\infty^{\hat u_j}(y_j),
\qquad j=1,\dots,m.
\label{eq:def_two-stage_x}
\end{equation}

\begin{proposition}[Exact single-level convex optimality]
Let \(\hat u\) be defined by \eqref{eq:def_u_hat}. Then the matrix
\(X = TSC_{1,\infty}^\eta(Y)\) defined by \eqref{eq:def_two-stage_x} is the unique
minimizer of the following single-level convex optimization problem:
\begin{equation}
\boxed{
\min_{X\in\mathbb{R}^{n\times m}}
\quad
\frac12\|X-Y\|_F^2
\quad
\text{s.t.}
\quad
\|x_j\|_\infty \le \hat u_j,
\ \forall j\in\{1,\dots,m\}.
}
\label{eq:single_level_hard_u}
\end{equation}
\end{proposition}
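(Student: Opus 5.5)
The plan is to show that the problem \eqref{eq:single_level_hard_u} separates across columns once $\hat u$ is fixed. First, $\hat u$ is well defined: the feasible set $\{u\ge 0,\ \|u\|_1\le\eta\}$ is nonempty, closed and convex, and the objective in \eqref{eq:def_u_hat} is strongly convex. Hence $\hat u$ is a single, fixed vector with $\hat u_j\ge 0$ for every $j$. In \eqref{eq:single_level_hard_u} the constraints therefore read $x_j\in\mathcal{C}_j:=\{z\in\mathbb{R}^n:\|z\|_\infty\le \hat u_j\}$. This set is a nonempty, closed, convex box, and it reduces to $\{0\}$ when $\hat u_j=0$. The feasible set of \eqref{eq:single_level_hard_u} is the Cartesian product $\mathcal{C}_1\times\cdots\times\mathcal{C}_m$, which is again nonempty, closed and convex.

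Next I write $\frac12\|X-Y\|_F^2=\sum_{j=1}^m \frac12\|x_j-y_j\|_2^2$. Since each summand depends only on $x_j$ and the constraints are decoupled, minimizing over the product set is the same as minimizing each summand over $\mathcal{C}_j$ independently. Concretely, $X^\star$ is a minimizer if and only if each $x_j^\star$ minimizes $\frac12\|x_j-y_j\|_2^2$ over $\mathcal{C}_j$. For one direction, if some column could be strictly improved, replacing that column alone keeps the point feasible and lowers the total. The other direction is immediate by summing.

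Each column subproblem is exactly the Euclidean projection of $y_j$ onto the $\ell_\infty$ ball of radius $\hat u_j$. This is $P_\infty^{\hat u_j}(y_j)$ in \eqref{eq:def_two-stage_x}, and explicitly it is the entrywise clipping $\operatorname{sign}(Y_{ij})\min(|Y_{ij}|,\hat u_j)$. Its existence and uniqueness follow from the projection theorem onto a nonempty closed convex set. Alternatively, they follow from strong convexity of the objective: it is $1$-strongly convex in Frobenius norm over a convex feasible set. Either way the global minimizer of \eqref{eq:single_level_hard_u} is unique and equals $X=TSC_{1,\infty}^\eta(Y)$.

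None of these steps is hard. The only points needing care are the degenerate case $\hat u_j=0$, where the projection is $0$ and the formula still holds, and stating the separability argument precisely.
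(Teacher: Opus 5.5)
Your proof is correct and matches the paper's argument: fix $\hat u$, note that the feasible set is the Cartesian product of column-wise $\ell_\infty$ balls so the Frobenius objective decouples, identify each column subproblem as the Euclidean projection $P_\infty^{\hat u_j}(y_j)$, and get uniqueness from strict convexity. Your extra points are sound refinements that the paper leaves implicit: $\hat u$ is well defined by strong convexity of the first-stage problem, $\hat u_j\ge 0$ guarantees each box is nonempty (including the degenerate case $\hat u_j=0$), and you spell out the separability equivalence explicitly.
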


\paragraph{Proof.}
Since \(\hat u\) is fixed, the constraint set in \eqref{eq:single_level_hard_u}
is a Cartesian product of convex sets:
\(
\mathcal{C}(\hat u)
=
\prod_{j=1}^m \{x\in\mathbb{R}^n:\ \|x\|_\infty\le \hat u_j\}.
\)
Therefore, the minimization problem \eqref{eq:single_level_hard_u} decouples
across columns:
\begin{equation}
\min_{X\in \mathcal{C}(\hat u)} \frac12\|X-Y\|_F^2
\;\equiv\;
\sum_{j=1}^m
\min_{\|x\|_\infty\le \hat u_j}
\frac12\|x-y_j\|_2^2.
\label{eq:decouple}
\end{equation}
For each column \(j\), the problem
\(
\min_{\|x\|_\infty\le \hat u_j}\frac12\|x-y_j\|_2^2
\)
is a Euclidean projection onto the closed convex set
\(
B_\infty^{\hat u_j}
=
\{x:\|x\|_\infty\le \hat u_j\}.
\)
Thus its unique minimizer is
\(x_j=P_\infty^{\hat u_j}(y_j)\), which is exactly \eqref{eq:def_two-stage_x}.
Uniqueness follows from strict convexity of the objective in each column.
\noindent
Proposition 3.1 provides an explicit single-level convex
characterization of the second stage of the proposed TSC operator.
For the radius vector $\widehat{\mathbf u}$ determined by the first
stage, TSC is the exact Euclidean projection onto the Cartesian product
of the corresponding column-wise $\ell_\infty$ balls.
This characterization does not imply that TSC coincides with the
Euclidean projection onto the global $\ell_{1,\infty}$ ball.

\subsection{A clipping operator algorithm}

\begin{remark}
\label{rem:l1_bounds}
As a consequence of the non-expansiveness and positivity of the $\ell_1$ projection: $\hat{\vu}
~\gets~ P^{1}_\eta\!\left(
\|\vy_1\|_\infty,\dots,\|\vy_m\|_\infty
\right)$; the components of $\hat{\vu}$ satisfy
\begin{equation}
0 \;\leq\; \hat{u}_j \;\leq\; \|\vy_j\|_\infty,
\qquad \forall j\in\{1,\dots,m\}.
\end{equation}
These bounds hold regardless of the magnitude of the column norms.
\end{remark}
Each column $\vx_j$ is thus computed as the projection of $\vy_j$ onto the
$\ell_\infty$ ball of radius $\hat{u}_j$:
\begin{equation}
\label{eq:linf_proj}
\vx_j
~\gets~
P^{\infty}_{\hat{u}_j}(\vy_j),
\qquad \forall j\in\{1,\dots,m\}.
\end{equation}

This projection admits the closed-form expression
\begin{equation}
\label{eq:linf_clipping}
X_{i,j}
~=~
\operatorname{sign}(Y_{i,j})\,
\min\bigl(|Y_{i,j}|,\hat{u}_j\bigr),
\qquad \forall i,j.
\end{equation}

\begin{remark}[Clipping interpretation]
The mapping $\mY \mapsto TSC^{1,\infty}_\eta(\mY)$ can be interpreted as a
\emph{column-wise clipping operator}, where $\hat{\vu}$ acts as a vector of
clipping thresholds.
Indeed,
\begin{align}
Y_{i,j} - X_{i,j}
&=
\operatorname{sign}(Y_{i,j})
\bigl(|Y_{i,j}| - \min(|Y_{i,j}|,\hat{u}_j)\bigr), \\
|Y_{i,j} - X_{i,j}|
&=
\bigl||Y_{i,j}| - \min(|Y_{i,j}|,\hat{u}_j)\bigr|.
\end{align}
Using Remark~\ref{rem:l1_bounds}, we obtain
\begin{equation}
\max_i |Y_{i,j} - X_{i,j}|
=
\|\vy_j\|_\infty - \hat{u}_j.
\end{equation}
Since $\hat{u}_j = \|\vx_j\|_\infty$ by construction, this yields
\begin{equation}
\label{eq:linf_difference}
\|\vy_j - \vx_j\|_\infty
=
\|\vy_j\|_\infty - \|\vx_j\|_\infty,
\qquad \forall j.
\end{equation}
\end{remark}

Algorithm~\ref{algo:linfbiproj} summarizes a possible implementation of the
TSC $\ell_{1,\infty}$ clipping.

\begin{algorithm}[H]
\caption{TSC $\ell_{1,\infty}$ clipping ($TSC^{1,\infty}_\eta(\mY)$)}
\label{algo:linfbiproj}
\begin{algorithmic}
\STATE \textbf{Input:} $\mY \in \sR^{n\times m}$, $\eta > 0$
\STATE $\hat{\vu} \gets P^{1}_\eta\!\left(\|\vy_1\|_\infty,\dots,\|\vy_m\|_\infty\right)$
\STATE $x_j \;=\; P_{\infty}^{u_j}(y_j)
\quad\Longleftrightarrow\quad
(x_j)_i \;=\; \mathrm{sign}((y_j)_i)\,\min\!\big(|(y_j)_i|,\,u_j\big),
\qquad \forall i.$
\STATE \textbf{Output:} $\mX = (\vx_1,\dots,\vx_m)$
\end{algorithmic}
\end{algorithm}

\begin{remark}
A vector $\vu$ is a feasible clipping operator if 
a feasible radius vector satisfies
\[
\sum_{j=1}^{m}u_j\leq\eta.
\]
Equality holds when the
constraint is active.
\end{remark}

\subsection{An $\ell_{1,\infty}$ norm Identity}
The purpose of this section is to compare the structural properties of the classical
$\ell_{1,\infty}$ projection and the proposed two-stage clipping.
In particular, we study the relationship between the norm of the clipping solution in the spirit of classical $L$-curve
analysis~\cite{Lcurves}. Recall the triangle inequality for the Frobenius norm,
which follows from the Cauchy--Schwarz inequality:
\begin{equation}
\label{eq:triangle}
\|\mY - TSC^{1,\infty}_\eta(\mY)\|_{\mathrm{F}}
+
\|TSC^{1,\infty}_\eta(\mY)\|_{\mathrm{F}}
\;\geq\;
\|\mY\|_{\mathrm{F}}.
\end{equation}

However, for the $\ell_{1,\infty}$ norm, we obtain a stronger identity for our TSC.

\begin{proposition}[Exact $\ell_{1,\infty}$ norm decomposition]
\label{prop:identity_bilevel}
For the TSC $\ell_{1,\infty}$ clipping, the projected matrix and the residual
satisfy the exact identity
\begin{equation}
\label{eq:identity_bilevel}
\boxed{
\|\mY - TSC^{1,\infty}_\eta(\mY)\|_{1,\infty}
+
\|TSC^{1,\infty}_\eta(\mY)\|_{1,\infty}
~=~
\|\mY\|_{1,\infty}.
}
\end{equation}
\end{proposition}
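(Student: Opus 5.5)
The identity will follow from a column-wise version of it, which we then sum over $j$. By definition,
\[
\|\mY\|_{1,\infty}=\sum_{j=1}^m \|\vy_j\|_\infty,
\]
and likewise for $\mX = TSC^{1,\infty}_\eta(\mY)$ and for $\mY-\mX$. It therefore suffices to prove, for every column $j$,
\[
\|\vy_j-\vx_j\|_\infty+\|\vx_j\|_\infty=\|\vy_j\|_\infty .
\]
This is exactly equation \eqref{eq:linf_difference} from the clipping interpretation remark. Summing it over $j=1,\dots,m$ gives \eqref{eq:identity_bilevel}. The work is to make sure that columnwise identity is justified without gaps, so I would reprove it from the closed form \eqref{eq:linf_clipping}.

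Fix $j$ and write $t=\hat u_j$. By Remark~\ref{rem:l1_bounds}, $0\le t\le \|\vy_j\|_\infty$.

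\textbf{Norm of the clipped column.} From \eqref{eq:linf_clipping}, $|X_{ij}|=\min(|Y_{ij}|,t)$. Since the map $s\mapsto\min(s,t)$ is nondecreasing, the maximum over $i$ is attained at an index $i^\ast$ with $|Y_{i^\ast j}|=\|\vy_j\|_\infty$. This gives
\[
\|\vx_j\|_\infty=\min(\|\vy_j\|_\infty,t)=t .
\]

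\textbf{Norm of the residual.} Again from \eqref{eq:linf_clipping}, $|Y_{ij}-X_{ij}|=|Y_{ij}|-\min(|Y_{ij}|,t)=(|Y_{ij}|-t)_+$. This is also nondecreasing in $|Y_{ij}|$, so it is maximized at $i^\ast$, giving
\[
\|\vy_j-\vx_j\|_\infty=(\|\vy_j\|_\infty-t)_+=\|\vy_j\|_\infty-t ,
\]
where the last step uses $t\le\|\vy_j\|_\infty$.

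Adding the two displays yields the columnwise identity $\|\vy_j-\vx_j\|_\infty+\|\vx_j\|_\infty=\|\vy_j\|_\infty$.

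The only delicate point is the bound $\hat u_j\le\|\vy_j\|_\infty$. Without it, the clipped column would have norm $\min(\|\vy_j\|_\infty,\hat u_j)$ rather than $\hat u_j$, and the residual formula would change. I would justify the bound as in Remark~\ref{rem:l1_bounds}. The projection onto the nonnegative $\ell_1$ ball has the form $\hat u_j=(v_j-\tau)_+$ for some threshold $\tau\ge 0$, and the components $v_j=\|\vy_j\|_\infty$ are nonnegative, so $0\le\hat u_j\le v_j$. The degenerate case $\|\mY\|_{1,\infty}\le\eta$ corresponds to $\tau=0$: then $\hat u_j=v_j$, $\mX=\mY$, and the identity holds trivially. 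With the bound in place, summing the columnwise identity over $j$ completes the proof.
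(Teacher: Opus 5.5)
Your proof is correct and follows the paper's argument: the paper also sums over columns the identity $\|\vy_j-\vx_j\|_\infty=\|\vy_j\|_\infty-\|\vx_j\|_\infty$, which it derives from the clipping closed form together with the bound $0\le\hat u_j\le\|\vy_j\|_\infty$. You additionally spell out two steps the paper leaves implicit: the monotonicity argument (both $\min(s,t)$ and $(s-t)_+$ are maximized at the largest $|Y_{ij}|$), and a cleaner justification of the bound via the soft-threshold form $\hat u_j=(v_j-\tau)_+$ with $\tau\ge 0$.
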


\begin{proof}
The result follows directly by summing over the columns the identity
\begin{equation}
\label{eq:basicInf_recall}
\|\vy_j - \vx_j\|_\infty
=
\|\vy_j\|_\infty - \|\vx_j\|_\infty,
\qquad \forall j,
\end{equation}
which holds for the TSC clipping .
Summing over $j$ yields
\[
\sum_{j=1}^m \|\vy_j - \vx_j\|_\infty
=
\sum_{j=1}^m \|\vy_j\|_\infty
-
\sum_{j=1}^m \|\vx_j\|_\infty,
\]
which is exactly~\eqref{eq:identity_bilevel}.
\end{proof}

\begin{proposition}[Exact identity for the proximal projection ]
\label{prop:identity_classical}
The standard $\ell_{1,\infty}$ projection  satisfies
\begin{equation}
\label{eq:identity_classical}
\|\mY - P^{1,\infty}_\eta(\mY)\|_{1,\infty}
+
\|P^{1,\infty}_\eta(\mY)\|_{1,\infty}
~=~
\|\mY\|_{1,\infty}.
\end{equation}
\end{proposition}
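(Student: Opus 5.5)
The plan is to prove this through the structure of the exact projection, using the Moreau decomposition \eqref{eq:moreau_l1infty}. The known characterization of $P^{1,\infty}_\eta(\mY)$ (see \cite{quattoni2009efficient,bejar2021fastest}) says it is itself a column-wise clipping. There exist thresholds $\mu_j\ge 0$ such that
\[
X_{ij}=\operatorname{sign}(Y_{ij})\min(|Y_{ij}|,\mu_j),
\]
with $\mu_j$ in $[0,\|\vy_j\|_\infty]$. Once this form is established, the argument of Remark on clipping and Proposition~\ref{prop:identity_bilevel} applies verbatim with $\hat u_j$ replaced by $\mu_j$.

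The first step is the trivial case. If $\|\mY\|_{1,\infty}\le\eta$, then $P^{1,\infty}_\eta(\mY)=\mY$, the residual vanishes, and the identity is immediate.

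Otherwise the constraint is active, and I derive the clipping form from the KKT conditions. The projection problem \eqref{Proj} separates into an outer problem over radii $t_j=\|\vx_j\|_\infty$ with $\sum_j t_j\le\eta$ and an inner problem per column. For fixed $t_j$, the best $\vx_j$ is the $\ell_\infty$-ball projection $P^\infty_{t_j}(\vy_j)$, which is exactly the clipping at level $t_j$. Hence the projection equals clipping at the optimal radii $\mu_j=t_j^\star$.

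The main point needing care is $\mu_j\le\|\vy_j\|_\infty$. This holds because any radius larger than $\|\vy_j\|_\infty$ gives the same column while wasting budget, so it can be reduced without increasing the objective. Also, $\mu_j\ge0$ holds since radii are nonnegative. Since the projection is unique, its column norms are the $\mu_j$ obtained once the wasted budget is removed. So $\|\vx_j\|_\infty=\mu_j$, where $\mu_j=0$ means the column is zeroed.

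With this form, $|Y_{ij}-X_{ij}|=(|Y_{ij}|-\mu_j)_+$. Its maximum over $i$ is $\|\vy_j\|_\infty-\mu_j=\|\vy_j\|_\infty-\|\vx_j\|_\infty$. Summing over $j$, exactly as in the proof of Proposition~\ref{prop:identity_bilevel}, gives \eqref{eq:identity_classical}.

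The main obstacle is justifying the clipping form of the exact projection rigorously. If the reduction to radii is felt to be too informal, I would instead read it off the dual prox in \eqref{eq:moreau_l1infty}. The prox of $\eta\|\cdot\|_{\infty,1}$ subtracts, in each column, the part of $\vy_j$ exceeding a water level, which again yields the clipped form.
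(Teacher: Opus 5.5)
Your proposal is correct and takes the same route as the paper. The paper's proof invokes the column-wise clipping structure of $P^{1,\infty}_\eta$ (the final step $X_{i,j}\gets\min(Y_{i,j},\mu_j)$ of the proximal algorithm in the appendix) and then sums the column identity exactly as in Proposition~\ref{prop:identity_bilevel}; your reduction to radii (clipping each column at level $\|\vx_j\|_\infty$ and using uniqueness of the projection) supplies the justification of that clipping structure, which the paper takes for granted, and your care about $\mu_j\le\|\vy_j\|_\infty$ is harmless but unnecessary, since clipping at any $\mu_j\ge0$ already gives $\|\vy_j-\vx_j\|_\infty+\|\vx_j\|_\infty=\|\vy_j\|_\infty$.
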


\begin{proof}
The proof follows exactly the same argument as for
Proposition~\ref{prop:identity_bilevel}, using the clipping structure of
$P^{1,\infty}_\eta$.
\end{proof}

\begin{remark}[Exact mixed-norm decomposition]
Equations~(18) and~(20) reveal a common structural property
of the exact Euclidean projection and the proposed two-stage
clipping operator. Both operators provide an exact additive
decomposition of the original mixed norm into a retained
component and a residual component.

In particular, when the constraint is active, both methods
preserve the same mixed-norm budget and yield identical
residual norms:
\begin{equation}
\boxed{
\|Y-TSC(Y)\|_{1,\infty}
=
\|Y-P(Y)\|_{1,\infty}
=
\|Y\|_{1,\infty}-\eta.
}
\end{equation}

Thus, although TSC and the exact projection generally produce
different matrices, they achieve exactly the same global
mixed-norm reduction. Their difference lies in how this
reduction is distributed among the matrix coefficients,
rather than in the total mixed-norm budget.

\end{remark}

\subsection{Relationship between $\ell_{1,\infty}$  Euclidean Projection and Two-Stage Clipping}
\label{sec:prox_tsc}

Let
\[
P=P_{\eta}^{1,\infty}(Y)
\qquad\text{and}\qquad
TSC=TSC_{\eta}^{1,\infty}(Y)
\]
denote, respectively, the exact Euclidean projection and the proposed
two-stage clipping operator.

Since
\[
P=
\arg\min_{\|X\|_{1,\infty}\leq\eta}
\frac{1}{2}\|X-Y\|_F^2
\]
and $TSC$ is feasible, i.e.,
\[
\|TSC\|_{1,\infty}\leq\eta,
\]
the optimality of the Euclidean projection immediately gives
\begin{equation}
\label{eq:optimal_projection}
\|Y-P\|_F
\leq
\|Y-TSC\|_F.
\end{equation}

The following result provides a quantitative bound on the discrepancy
between the two solutions.

\begin{proposition}[Distance between exact projection and two-stage clipping]
\label{prop:distance_TSC_P}
The exact Euclidean projection $P$ and the two-stage clipping solution
$TSC$ satisfy
\begin{equation}
\label{eq:distance_TSC_P}
\boxed{
\|TSC-P\|_F^2
\leq
\|Y-TSC\|_F^2-\|Y-P\|_F^2.
}
\end{equation}
\end{proposition}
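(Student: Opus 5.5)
This is the standard Pythagorean-type inequality for projections onto closed convex sets, applied with $TSC$ as the comparison point. Set $\mathcal B=\mathcal B_\eta^{1,\infty}$, which is closed and convex. By the variational characterization of the Euclidean projection $P=P_\eta^{1,\infty}(Y)$ onto $\mathcal B$, we have
\begin{equation*}
\langle Y-P,\; Z-P\rangle_F \le 0 \qquad \text{for all } Z\in\mathcal B .
\end{equation*}
This is the first-order optimality condition of the strictly convex problem \eqref{Proj}. The feasibility of $TSC$, i.e.\ $\|TSC\|_{1,\infty}\le\eta$, has already been noted before \eqref{eq:optimal_projection}. It follows from Remark~\ref{rem:l1_bounds} and the clipping formula \eqref{eq:linf_clipping}, since $\|x_j\|_\infty=\hat u_j$ and $\sum_j\hat u_j\le\eta$. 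Hence we may take $Z=TSC$.

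Next, I would expand the residual of $TSC$ around $P$. Writing $Y-TSC=(Y-P)+(P-TSC)$ gives
\begin{equation*}
\|Y-TSC\|_F^2=\|Y-P\|_F^2+\|P-TSC\|_F^2+2\langle Y-P,\;P-TSC\rangle_F .
\end{equation*}
By the variational inequality with $Z=TSC$, the cross term satisfies $\langle Y-P, P-TSC\rangle_F=-\langle Y-P, TSC-P\rangle_F\ge 0$. Dropping this term and rearranging yields \eqref{eq:distance_TSC_P} directly. As a consequence, we also recover \eqref{eq:optimal_projection}.

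There is no real obstacle here; the only points needing care are the following.
\begin{itemize}
\item The variational inequality must be justified. It follows from the quadratic expansion of $t\mapsto\frac12\|P+t(Z-P)-Y\|_F^2$ on $[0,1]$, whose derivative at $t=0$ must be nonnegative by minimality of $P$ and convexity of $\mathcal B$.
\item The feasibility of $TSC$ must hold even when the $\ell_1$ constraint in \eqref{eq:def_u_hat} is inactive. In that case $\hat u=v$ and $TSC=Y\in\mathcal B$, so $P=Y$ and both sides of \eqref{eq:distance_TSC_P} vanish.
\end{itemize}
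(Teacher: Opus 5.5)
Your proof is correct and is essentially the paper's own argument (Appendix B): apply the variational inequality $\langle Y-P, X-P\rangle_F\le 0$ at the feasible point $X=TSC$, then expand $Y-TSC=(Y-P)-(TSC-P)$ to obtain $\|Y-TSC\|_F^2\ge\|Y-P\|_F^2+\|TSC-P\|_F^2$. Your added justifications are sound and fill in details the paper only asserts: the derivative argument for the variational inequality, and the feasibility of $TSC$ via $\|x_j\|_\infty\le\hat u_j$ and $\sum_j\hat u_j\le\eta$.
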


Proposition~\ref{prop:distance_TSC_P} shows that the excess squared
Euclidean approximation error of two-stage clipping directly controls
its distance to the exact projection. Hence, when the two approximation
errors are close, the corresponding solutions are also close.

Moreover, both operators admit a column-wise clipping representation.
Let $u_j^P$ and $u_j^{TSC}$ denote their respective clipping thresholds.
Since scalar clipping is $1$-Lipschitz with respect to its threshold,
\[
\left|
\operatorname{clip}(Y_{ij},u_j^P)
-
\operatorname{clip}(Y_{ij},u_j^{TSC})
\right|
\leq
|u_j^P-u_j^{TSC}|.
\]
Summing over all entries gives the deterministic bound
\begin{equation}
\label{deterministic_TSC_bound}
\boxed{
\|P-TSC\|_F^2
\leq
n\sum_{j=1}^{m}
|u_j^P-u_j^{TSC}|^2
=
n\|u^P-u^{TSC}\|_2^2.
}
\end{equation}
Equivalently,
\[
\boxed{
\|P-TSC\|_F
\leq
\sqrt{n}\,
\|u^P-u^{TSC}\|_2.
}
\]
This bound is distribution-free and therefore holds for arbitrary
input data.

\begin{proposition}[Refined deterministic bound]
Let $P=P_\eta^{1,\infty}(Y)$ and
$TSC=TSC_\eta^{1,\infty}(Y)$ denote the exact Euclidean
projection and the two-stage clipping operator,
respectively.

For each column $j$, define
\[
a_j=\min\{u_j^P,u_j^{TSC}\},
\qquad
d_j=|u_j^P-u_j^{TSC}|,
\]
and
\[
k_j(a_j)=\#\{i:|Y_{ij}|>a_j\}.
\]

Since both clipping operators coincide whenever
$|Y_{ij}|\leq a_j$, their difference satisfies

\begin{equation}
\label{difference}
|P_{ij}-TSC_{ij}|
=
\min\left\{
d_j,(|Y_{ij}|-a_j)_+
\right\}.
\end{equation}

For each coefficient, the difference between the
two clipping operators is zero when
$|Y_{ij}|\leq a_j$ and is bounded by $d_j$
otherwise. Therefore,
\begin{equation}
|P_{ij}-TSC_{ij}|^2
\leq
d_j^2\,\mathbf{1}_{\{|Y_{ij}|>a_j\}}.
\end{equation}
Summing over all coefficients gives

\begin{equation}
\label{eq:refined_bound}
\boxed{
\|P-TSC\|_F^2
\leq
\sum_{j=1}^{m}k_j(a_j)d_j^2
\leq
n\|u^P-u^{TSC}\|_2^2.
}
\end{equation}
\end{proposition}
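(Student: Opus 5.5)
The plan is to work entry by entry and reduce everything to a scalar statement about clipping at two thresholds. The starting point is that both operators are column-wise clippings, as recalled just before the statement. For TSC this is \eqref{eq:linf_clipping}. For the exact projection it follows from Moreau's decomposition \eqref{eq:moreau_l1infty} and the structure of the algorithm in appendix~\ref{Bejar}. Writing $t=|Y_{ij}|$, $s=\operatorname{sign}(Y_{ij})$ and $\operatorname{clip}(t,u)=\min(t,u)$, we get
\[
P_{ij}=s\min(t,u_j^P),\qquad TSC_{ij}=s\min(t,u_j^{TSC}).
\]
The common sign factor cancels, so
\[
|P_{ij}-TSC_{ij}|=\bigl|\min(t,u_j^P)-\min(t,u_j^{TSC})\bigr|.
\]

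The key step is the scalar identity \eqref{difference}. Order the two thresholds as $a_j\le b_j$ with $b_j=a_j+d_j$, and split into three cases.
\begin{itemize}
\item If $t\le a_j$, both minima equal $t$, so the difference is $0=\min\{d_j,(t-a_j)_+\}$.
\item If $a_j<t\le b_j$, the difference is $t-a_j$, which is at most $d_j$.
\item If $t>b_j$, the difference is $b_j-a_j=d_j$, which is smaller than $t-a_j$.
\end{itemize}
In every case the difference equals $\min\{d_j,(t-a_j)_+\}$. Since this quantity vanishes when $t\le a_j$ and is at most $d_j$ otherwise, we obtain
\[
|P_{ij}-TSC_{ij}|^2\le d_j^2\,\mathbf 1_{\{|Y_{ij}|>a_j\}}.
\]

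Next I sum over $i$ within column $j$. The indicator sum is exactly $k_j(a_j)$, so the column contribution is at most $k_j(a_j)d_j^2$. Summing over $j$ gives the first inequality in \eqref{eq:refined_bound}. For the second inequality, note that $k_j(a_j)\le n$ trivially, so
\[
\sum_j k_j(a_j)d_j^2\le n\sum_j d_j^2=n\|u^P-u^{TSC}\|_2^2,
\]
which recovers \eqref{deterministic_TSC_bound} as a corollary.

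The main obstacle is justifying rigorously that $P_\eta^{1,\infty}(Y)$ is a column-wise clipping with thresholds $u_j^P\ge 0$. Everything else is elementary. I would obtain this from the KKT conditions of \eqref{Proj}. The constraint decouples as $\sum_j\|x_j\|_\infty\le\eta$, so for a fixed optimal radius vector $u^P=(\|P_{\cdot j}\|_\infty)_j$ the problem reduces, exactly as in the proof of Proposition 3.1, to independent projections onto the balls $\{\|x\|_\infty\le u_j^P\}$. Each of these is the clipping \eqref{eq:linf_clipping} with threshold $u_j^P$. This argument supplies the clipping representation the statement relies on, and it needs no assumption on how the thresholds were computed.
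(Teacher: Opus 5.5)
Your proof is correct and is essentially the paper's argument: the same three-case comparison of $\min(t,u_j^P)$ and $\min(t,u_j^{TSC})$ giving $\min\{d_j,(|Y_{ij}|-a_j)_+\}$, then the indicator bound, summation over all coefficients, and $k_j(a_j)\le n$. Your one addition is a justification that $P$ is a column-wise clipping, which the paper simply takes from the last step of its appendix algorithm. Your restriction argument for this is sound: $\prod_j\{x:\|x\|_\infty\le u_j^P\}$ with $u_j^P=\|P_{\cdot j}\|_\infty$ lies inside $\mathcal B_{\eta}^{1,\infty}$ and contains $P$, so $P$ is also the unique projection onto this product set. You should state that containment explicitly rather than invoke KKT.
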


Finally, $k_j(a_j)\leq n$ yields the
previous deterministic bound.
\paragraph{Refined deterministic bound using actual threshold exceedances.}

The deterministic bound can be further refined by exploiting
the fact that the exact Euclidean projection and TSC coincide
for coefficients whose absolute values do not exceed the
smaller clipping threshold.

Summing equation \ref{difference} over all coefficients gives

\begin{equation}
\|P-TSC\|_F^2
=
\sum_{j=1}^{m}\sum_{i=1}^{n}
\min\left\{
d_j^2,(|Y_{ij}|-a_j)_+^2
\right\}.
\end{equation}

Using the elementary inequality
$\sum_i\min\{a_i,b_i\}\leq
\min\{\sum_i a_i,\sum_i b_i\}$,
we obtain the refined deterministic bound
\begin{equation}
\label{eq:refined_deterministic_bound}
\boxed{
\|P-TSC\|_F^2
\leq
\sum_{j=1}^{m}
\min\left\{
k_jd_j^2,
\sum_{i=1}^{n}(|Y_{ij}|-a_j)_+^2
\right\}.
}
\end{equation}

This yields the hierarchy
\begin{equation}
\label{Bound_R}
\begin{aligned}
\|P-TSC\|_F^2
&\leq
\sum_{j=1}^{m}
\min\left\{
k_jd_j^2,
\sum_{i=1}^{n}(|Y_{ij}|-a_j)_+^2
\right\}\\
&\leq
\sum_{j=1}^{m}k_jd_j^2\\
&\leq
n\|u^P-u^{TSC}\|_2^2.
\end{aligned}
\end{equation}

Unlike the previous worst-case estimate, this refined
bound accounts for both the number of coefficients
exceeding the clipping thresholds and the magnitude
of their exceedances. It is deterministic and
distribution-free.

\subsection{Computational complexity}

The proposed two-stage $\ell_{1,\infty}$ clipping
operator is particularly well suited to GPU
architectures. Its computation consists of three
steps (see Algorithm~1):
\begin{enumerate}
    \item Column-wise $\ell_\infty$ norm computation
    with complexity $O(nm)$.
    \item Projection of the resulting vector onto
    the $\ell_1$ ball with complexity $O(m\log m)$.
    \item Element-wise clipping of the matrix
    columns with complexity $O(nm)$.
\end{enumerate}

The computational complexity of the proposed TSC operator is
\begin{equation}
T_{\mathrm{TSC}}(n,m)
= O(nm+m\log m).
\end{equation}
For sufficiently large matrices satisfying
$n=\Omega(\log m)$, the dominant term is $O(nm)$.
Consequently,
\begin{equation}
\boxed{
T_{\mathrm{TSC}}(n,m)=O(nm).
}
\end{equation}
In particular, for square attention matrices ($n=m$),
the complexity is $O(n^2)$, which is linear in the
number of matrix coefficients.
These operations involve parallel reductions,
a single vector projection, and element-wise
transformations, making TSC particularly efficient
on GPUs.

In contrast, classical exact $\ell_{1,\infty}$
projection algorithms require more elaborate
optimization procedures, involving sorting,
threshold computations, and active-set updates
\cite{quattoni2009efficient,bejar2021fastest}.

Although Algorithm~2 of Perez et al.~\cite{ICTAI}
provides an efficient CPU implementation of the exact
$\ell_{1,\infty}$ projection, its heap-based structure
involves sequential dependencies and irregular memory
access patterns that are not naturally suited to
massively parallel GPU execution.
This limitation is related to the algorithmic structure
rather than to the C++ programming language itself,
which can also support efficient GPU implementations
through CUDA.
An important distinction between the proposed TSC
operator and Algorithm~2 of Perez et al.
is their dependence on the induced sparsity level where $J$ denotes the number of heap-update
operations. This quantity is data-dependent and
may vary with the projection radius and the
resulting sparsity level.

\begin{table}[ht]
\centering
\caption{Computational complexity of different
$\ell_{1,\infty}$ projection methods.}
\label{tab:projection_complexity}
\begin{tabular}{lll}
\hline
Method & Complexity & Main operations \\
\hline
Original Bejar et al 
& $O(mn\log(mn))$
& Sorting and threshold updates \\

Perez et al. (Algorithm 2)
& $O(mn+J\log(mn))$
& Inverse ordering and heaps \\

Proposed TSC
& $O(mn)$
& $\ell_1$ projection and clipping \\
\hline
\end{tabular}
\end{table}

\section{A two-stage $\ell_{1,2}$ clipping operator}
\label{subsec:l12_bilevel}
In this section, we extend the proposed TSC framework to the
$\ell_{1,2}$ norms, yielding additional forms of structured sparsity.\\
Let $\mY \in \sR^{n \times m}$ be a matrix with column vectors
$\vy_1,\dots,\vy_m \in \sR^n$.
Define the aggregation vector
\[
\vv_2 \;=\; \bigl(\|\vy_1\|_2,\dots,\|\vy_m\|_2\bigr) \in \sR^m,
\]
which collects the $\ell_2$ norms of the columns of $\mY$.
We define the TSC $\ell_{1,2}$ clipping as the solution of the following
optimization problem:
\begin{equation}
\hat u \;:=\; P_1^\eta(v)
\;\in\;
\arg\min_{u:\ \|u\|_1\le \eta,\ u\ge 0}
\frac12\|u-v\|_2^2,
\label{eq:def_u_hat_2}
\end{equation}
and define the two-stage operator \(TSC_{1,2}^\eta(Y)\) column-wise as
\begin{equation}
x_j \;:=\; P_2^{\hat u_j}(y_j),
\qquad j=1,\dots,m.
\end{equation}

Obviously, propositions~1 holds and provides also an explicit single-level convex optimality principle for the proposed $TSC_{1,2}^\eta(Y)$
two-stage operator.

\begin{algorithm}[ht]
\caption{Two-stage $\ell_{1,2}$ clipping ($TSC^{1,2}_\eta(\mY)$)}
\label{algo:li12proj}
\begin{algorithmic}
\STATE \textbf{Input:} $\mY \in \sR^{n\times m}$, $\eta > 0$
\STATE $\hat{\vu} \gets P^{1}_\eta\bigl(\|\vy_1\|_2,\dots,\|\vy_m\|_2\bigr)$
\STATE $ x_j \;=\; P_{2}^{u_j}(y_j)
\quad\Longleftrightarrow\quad
x_j \;=\;
\begin{cases}
y_j, & \text{if } \|y_j\|_2 \le u_j,\\[4pt]
\dfrac{u_j}{\|y_j\|_2}\,y_j, & \text{if } \|y_j\|_2 > u_j.
\end{cases}  $
\STATE \textbf{Output:} $\mX = [\vx_1,\dots,\vx_m]$
\end{algorithmic}
\end{algorithm}

\subsection{The $\ell_{1,2}$ TSC is an exact Euclidean projection}

\begin{proposition}
For any $\mY\in\mathbb{R}^{n\times m}$ and $\eta>0$,
the proposed two-stage clipping operator coincides
with the exact Euclidean projection onto the
$\ell_{1,2}$ norm ball:
\begin{equation}
\boxed{
TSC_{\eta}^{1,2}(\mY)=P_{\eta}^{1,2}(\mY).
}
\end{equation}
\end{proposition}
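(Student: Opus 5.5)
The plan is to reduce the $\ell_{1,2}$ projection to a problem on column norms by a polar (radius--direction) decomposition of each column. Any feasible $\mX$ can be written column-wise as $\vx_j = r_j \vz_j$ with $r_j=\|\vx_j\|_2\ge 0$ and $\|\vz_j\|_2=1$. The constraint $\|\mX\|_{1,2}=\sum_j r_j\le\eta$ involves only the radii $r=(r_1,\dots,r_m)$. The objective splits as
\[
\tfrac12\|\mX-\mY\|_F^2=\sum_{j=1}^m\Bigl(\tfrac12 r_j^2 - r_j\langle \vz_j,\vy_j\rangle+\tfrac12\|\vy_j\|_2^2\Bigr).
\]

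\textbf{Step 1: optimize the directions.} For fixed $r\ge 0$, Cauchy--Schwarz gives $\langle \vz_j,\vy_j\rangle\le\|\vy_j\|_2$. Equality is attained at $\vz_j=\vy_j/\|\vy_j\|_2$; when $\vy_j=0$ any unit vector works. Since $r_j\ge0$, this choice is optimal for each $j$ independently. The reduced objective is then
\[
\sum_{j=1}^m\tfrac12\bigl(r_j-\|\vy_j\|_2\bigr)^2,
\]
to be minimized over $r\ge0$ with $\sum_j r_j\le\eta$. This is exactly the first-stage problem \eqref{eq:def_u_hat_2} with $v=\vv_2$. Hence the optimal radii are $r=\hat u=P_1^\eta(\vv_2)$, unique by strict convexity.

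\textbf{Step 2: identify the minimizer with $TSC^{1,2}_\eta(\mY)$.} The minimizer is $\vx_j=\hat u_j\,\vy_j/\|\vy_j\|_2$, with $\vx_j=0$ if $\vy_j=0$; in that case Remark~\ref{rem:l1_bounds} gives $\hat u_j=0$. I will then check that this agrees with Algorithm~\ref{algo:li12proj}. By the $\ell_2$ analogue of Remark~\ref{rem:l1_bounds} (same non-expansiveness and positivity argument), $0\le\hat u_j\le\|\vy_j\|_2$.
\begin{itemize}
\item If $\hat u_j<\|\vy_j\|_2$, the algorithm outputs $\frac{\hat u_j}{\|\vy_j\|_2}\vy_j$, which matches.
\item If $\hat u_j=\|\vy_j\|_2$, the algorithm outputs $\vy_j=\frac{\hat u_j}{\|\vy_j\|_2}\vy_j$, which also matches.
\end{itemize}
Uniqueness of $P^{1,2}_\eta(\mY)$ follows from strict convexity of the Frobenius objective over a closed convex ball. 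Together these give $TSC^{1,2}_\eta(\mY)=P^{1,2}_\eta(\mY)$.

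\textbf{Main obstacle.} The delicate point is justifying the two-step minimization rigorously. I will argue via $\min_{\mX}=\min_r\min_{\vz}$, which is valid because the polar parametrization is surjective onto the feasible set. The degenerate cases $\vy_j=0$ and $r_j=0$ must also be handled, since there the direction is not unique but the product $r_j\vz_j$ is. As an alternative cross-check, I could verify the KKT conditions directly: they read $\vy_j-\vx_j=\lambda\,\vx_j/\|\vx_j\|_2$ for active columns and $\|\vy_j\|_2\le\lambda$ for zeroed columns. Here $\lambda$ is exactly the soft-threshold level of the $\ell_1$ projection, which gives $\hat u_j=(\|\vy_j\|_2-\lambda)_+$.
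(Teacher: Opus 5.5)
Your proposal is correct and takes essentially the same route as the paper. Your Cauchy--Schwarz optimization over directions is the paper's reverse-triangle bound $\|\vy_j-\vz_j\|_2\ge|v_j-r_j|$; it reduces the problem to the first-stage $\ell_1$ projection of the column norms, after which TSC attains the bound (via $\hat u_j\le v_j$) and uniqueness of the Euclidean projection concludes. The paper states this as a lower bound for every feasible $\mZ$ followed by attainment, which avoids the nested-minimization justification you flagged as your main obstacle, but the mathematical content is the same.
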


\begin{proof}
Let $\mY=[\mathbf y_1,\ldots,\mathbf y_m]$ and
$v_j=\|\mathbf y_j\|_2$.

By construction, TSC first computes
\[
\hat{\mathbf u}=P_\eta^1(\mathbf v),
\]
and then performs column-wise Euclidean projections
onto the $\ell_2$ balls of radii $\hat u_j$:
\[
\mathbf x_j=
\begin{cases}
\dfrac{\hat u_j}{v_j}\mathbf y_j,
& v_j>0,\\[2mm]
0,&v_j=0.
\end{cases}
\]

For any matrix $\mZ$ satisfying
$\|\mZ\|_{1,2}\leq\eta$, let
$r_j=\|\mathbf z_j\|_2$.
By the reverse triangle inequality,
\[
\|\mY-\mZ\|_{\mathrm F}^2
=
\sum_{j=1}^m\|\mathbf y_j-\mathbf z_j\|_2^2
\geq
\sum_{j=1}^m(v_j-r_j)^2
\geq
\|\mathbf v-\hat{\mathbf u}\|_2^2.
\]

Since $\hat u_j\leq v_j$, the TSC construction
attains this lower bound:
\[
\|\mY-TSC_\eta^{1,2}(\mY)\|_{\mathrm F}^2
=
\|\mathbf v-\hat{\mathbf u}\|_2^2.
\]

Thus, TSC minimizes the Euclidean projection
objective over the $\ell_{1,2}$ ball.
By uniqueness of the Euclidean projection,
\[
TSC_\eta^{1,2}(\mY)=P_\eta^{1,2}(\mY).
\]
\end{proof}

Unlike the $\ell_{1,\infty}$ case, the proposed
two-stage construction introduces no approximation
error for the $\ell_{1,2}$ norm.

The TSC-$\ell_{1,2}$ and Prox-$\ell_{1,2}$
algorithms are mathematically identical and
share exactly the same implementation.
Consequently, they produce identical solutions
with identical computational complexity
and execution time.

\subsection{A unified two-stage framework for mixed-norm projections}

The proposed Two-Stage Clipping (TSC) framework provides
a unified computational approach to enforcing mixed-norm
constraints. Its main advantage is that it separates
the projection into two stages: first, a low-dimensional
projection of column-wise norms onto an $\ell_1$ ball,
followed by a column-wise reconstruction satisfying
the prescribed constraint.

Interestingly, the resulting operator exhibits different
optimality properties depending on the underlying
mixed norm.

\paragraph{The $\ell_{1,\infty}$ case: a fast feasible approximation.}

For a matrix $Y\in\mathbb{R}^{n\times m}$, let
\[
a_j=\|Y_j\|_\infty,
\qquad
r=P_\eta^1(a),
\]
where $P_\eta^1$ denotes the Euclidean projection onto
the $\ell_1$ ball of radius $\eta$.

The TSC operator is defined by
\[
\mathrm{TSC}_\eta^{1,\infty}(Y)_{ij}
=
\operatorname{sign}(Y_{ij})
\min\{|Y_{ij}|,r_j\}.
\]

By construction,
\[
\|\mathrm{TSC}_\eta^{1,\infty}(Y)\|_{1,\infty}
\leq \eta.
\]

However, this operator does not generally minimize
the Frobenius distance to $Y$ over the feasible set.
It therefore provides a computationally efficient
feasible approximation to the exact Euclidean projection.

\paragraph{The $\ell_{1,2}$ case: recovery of the exact projection.}

Consider now the mixed norm
\[
\|Y\|_{1,2}=\sum_{j=1}^{m}\|Y_j\|_2.
\]

Let
\[
a_j=\|Y_j\|_2,
\qquad
r=P_\eta^1(a).
\]

The second stage consists of the radial reconstruction
\[
\mathrm{TSC}_\eta^{1,2}(Y)_j
=
\begin{cases}
\dfrac{r_j}{a_j}Y_j, & a_j>0,\\[6pt]
0, & a_j=0.
\end{cases}
\]

Since the projection of $a$ onto the $\ell_1$ ball
satisfies
\[
r_j=(a_j-\lambda)_+,
\]
for some $\lambda\geq0$, we obtain
\[
\mathrm{TSC}_\eta^{1,2}(Y)_j
=
\left(1-\frac{\lambda}{\|Y_j\|_2}\right)_+Y_j,
\]
with the convention that zero columns remain zero.

This expression coincides with the classical
group soft-thresholding formula for the Euclidean
projection onto the $\ell_{1,2}$ ball. Consequently,
\begin{equation}
\boxed{
\mathrm{TSC}_\eta^{1,2}(Y)
=
P_\eta^{1,2}(Y).
}
\end{equation}

\paragraph{Interpretation and contribution.}

These results highlight an important structural property
of the proposed framework. While TSC provides a fast
feasible approximation for the $\ell_{1,\infty}$ constraint,
it recovers the exact Euclidean projection for the
$\ell_{1,2}$ constraint.

The exact $\ell_{1,2}$ projection is a well-established
result in proximal optimization and structured sparsity.
Therefore, the contribution is not the introduction
of a new $\ell_{1,2}$ projection algorithm, but rather
the identification of a common two-stage formulation
that encompasses both approximate and exact projections.

This unified perspective clarifies how the geometry
of the mixed-norm constraint determines whether
the two-stage construction is exact or approximate,
while retaining a computational structure suitable
for vectorized implementations.

\begin{remark}[Exact $\ell_{1,2}$ norm identity]
\label{prop:l12_identity}
The two-stage $\ell_{1,2}$ clipping  satisfies the identity 
\begin{equation}
\label{Identity-L12}
\boxed{
\|\mY - TSC^{1,2}_\eta(\mY)\|_{1,2}
+
\|TSC^{1,2}_\eta(\mY)\|_{1,2}
=
\|\mY\|_{1,2}.
}
\end{equation}
\end{remark}

Consequently, the exact mixed-norm identity also holds
for the Euclidean projection:
\begin{equation}
\boxed{
\|Y-P_{\eta}^{1,2}(Y)\|_{1,2}
+
\|P_{\eta}^{1,2}(Y)\|_{1,2}
=
\|Y\|_{1,2}.
}
\end{equation}

This exact additive decomposition is specific to the mixed
$\ell_{1,2}$ norm and does not, in general, extend to the
Frobenius norm.

\section{Experimental results} 
\label{Exp}

Both TSC and the exact projection are evaluated on the
same synthetic matrices, using identical projection radii,
hardware, and timing protocols.
This comparison therefore evaluates the computational
advantage of TSC over a Prox exact projection on a GPU
implementation.
While the Prox exact projection minimizes the Euclidean distance
to the input matrix, TSC provides a computationally
efficient feasible approximation.
We consequently report both execution times and
approximation errors to characterize the trade-off
between computational efficiency and projection accuracy.

\subsection{Comparison of classical Euclidean projection vs proposed two-stage clipping operator}
We compared our two-stage method against the proximal projection algorithm on synthetic Gaussian data.
\begin{figure}[ht]
    \centering
    \includegraphics[width=0.6\textwidth]{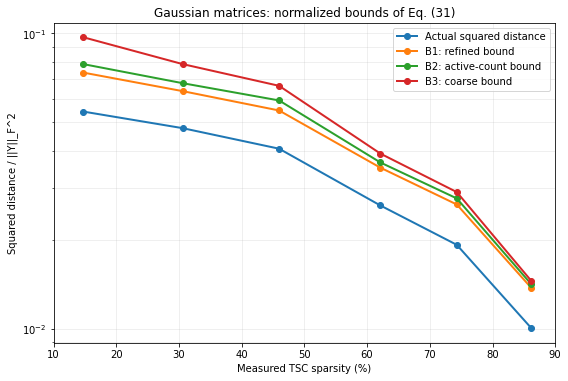}
    \includegraphics[width=0.6\textwidth]{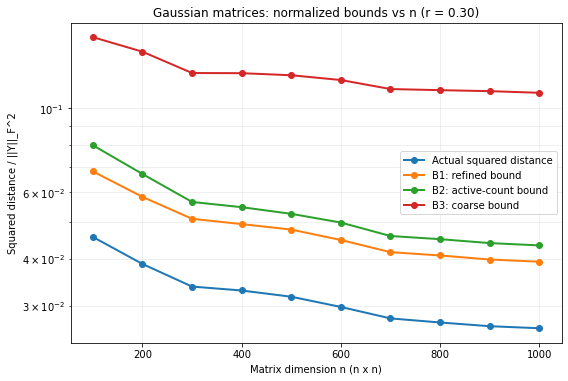}
    \caption{
Relative difference between the classical Euclidean projection
$P_{\eta}^{1,\infty}$ and the proposed two-stage clipping operator
$TSC_{\eta}^{1,\infty}$ for i.i.d.\ Gaussian matrices, as a function of
the size and the sparsity.
}
\label{fig:bounds}
\end{figure}
Figure~\ref{fig:bounds} reports the normalized additional
Euclidean cost of the two-stage clipping operator relative to the exact
projection,
as a function of the size n (Bottom) (for r= 0.3) and the sparsity (Top) for $n=1000$ for Gaussian input matrices.

Empirical results on Gaussian data in Figure~\ref{fig:bounds} show that:
\begin{itemize}
    \item 
The normalized distance between the exact proximal
projection (Prox) and the proposed two-stage clipping
operator (TSC) remains very small;
\item  This distance decreases as the matrix dimension $n$
and the sparsity level increase;
\item The theoretical upper bounds exhibit a similar
behavior with respect to both matrix dimension and sparsity.

\item Furthermore, the refined bounds in Eq. \ref{Bound_R} provide
significantly tighter estimates of the actual distance
than the coarse upper bound.
\end{itemize}
These experimental results empirically support the
similarity between TSC and Prox in terms of
classification accuracy.

\subsection{Time comparison between the proximal projection and our two-stage clipping operator methods on GPU device }
Both TSC and the classical proximal projection (Prox)
were implemented in PyTorch and evaluated on an NVIDIA
A100 GPU. The proximal algorithm is detailed in
Appendix~\ref{Bejar}. Both methods were tested on the same
square matrices using identical projection radii.
Execution times are reported in milliseconds as a
function of the matrix dimension, and the speedup
is defined as the ratio between Prox and TSC
execution times.

\begin{figure}[ht]
    \centering
    \includegraphics[width=0.7\textwidth]{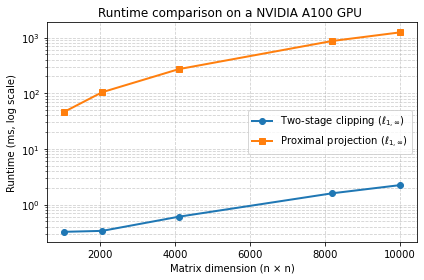}
    \includegraphics[width=0.7\textwidth]{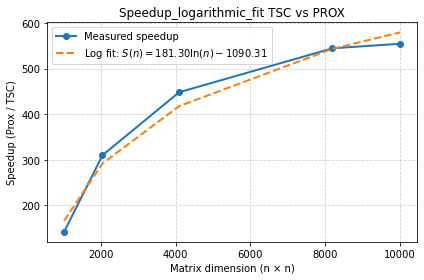}
    \caption{Comparison of two-stage Clipping versus proximal. U: Processing time in PyTorch using an NVIDIA A100 GPU as a function of the size of the square matrix $n \times n $. Bottom: Logarithm fitting of the speedup of our two-stage clipping operator versus the proximal algorithm.}
    \label{A100}    
\end{figure}
Figure~\ref{A100} shows the runtime comparison between the proposed two-stage clipping $\ell_{1,\infty}$  operator 
and the classical proximal $\ell_{1,\infty}$ projection on GPU.
Note that runtime of two-stage clipping operator $\ell_{1,2}$ is identical to the runtime of the two-stage clipping operator $\ell_{1,\infty}$ .
and the classical proximal $\ell_{1,\infty}$
The horizontal axis represents the matrix dimension $n$ for square matrices 
$n \times n$, while the vertical axis (log scale) shows the runtime in milliseconds.
The two-stage method relies only on reductions and element-wise clipping operations,
which are highly parallelizable on GPU architectures.
In contrast, the proximal method requires repeated simplex projections
involving sorting operations, leading to significantly higher runtime.
The results shown in Figure~\ref{A100} (left) indicate that the proposed two-stage clipping algorithm
outperforms the proximal method by a factor ranging up to $540 $ when the size of the attention
matrix is $n=10,000$.
Figure\ref{A100} (Bottom) show that the measured speedup increases approximately
logarithmically over the considered matrix-size range, consistently with
the different computational structures of the two algorithms.

\subsection{Classification on biomedical dataset with sparse Transformer Architectures}

\paragraph{Criterion.}
We define the optimisation problem on transformer architectures.
Let $\mW \in \mathbb{R}^{n\times n}$ denote the attention matrix, where $n$ is the number of tokens. Let $\vz \in \mathbb{R}^{n\times 1}$ represent the true labels, and $\vz^*$ the estimated labels obtained from a softmax classifier. 
To sparsify the weights $\mW$ of the attention matrix, we compare the classical projection and  the two-stage clipping method as a constraint to enforce structured sparsity in the model. The global optimization criterion is defined as:
\begin{equation}
 \label{crit}
\boxed{
\begin{aligned}
\min_{W}\quad & \phi\bigl(z,z^*(W)\bigr) \qquad
\text{subject to}\quad & \|W\|_{1,\infty}\leq \eta.
\end{aligned}
}
\end{equation}
 
where $\phi(\vz,\vz^*(W))$ is the cross-entropy loss.

The optimization problem is solved using a masked double-descent gradient algorithm \cite{double}.
We implemented the proposed two-stage clipping method in PyTorch, including the model architecture, optimizer, learning-rate schedulers, and loss functions.
We use the Adam optimizer \cite{Adam} and the smooth SiLU activation function.\\
We compare our two-stage clipping operator with the classical proximal $\ell_{1,\infty}$ projection for matrix sparsification on  transformer architectures \cite{Transformer}.
We use a lightweight Transformer architecture with two layers and two heads.

\paragraph{Classification on ECG  Dataset }
Accurate classification and interpretability are critical requirements in biomedical applications such as ECG-based heart failure diagnosis.
We evaluate our method on the PhysioNet PTB Diagnostic ECG Database \cite{ECG-data}.
The classification task is formulated as a binary problem with 10,505 abnormal and 4,045 normal ECG recordings.
The signals are preprocessed and segmented, with each segment corresponding to a heartbeat represented by 187 tokens.
 Note that we use a 45 $^{\circ}$  reshape for inducing  a diagonal-wise structured sparsity.

\begin{figure}[ht]
\centering
\includegraphics[width=0.9\textwidth]{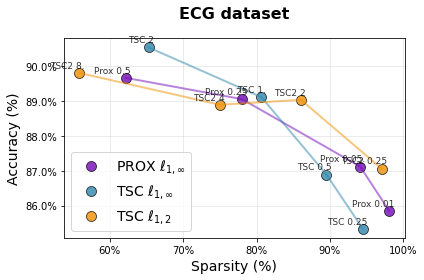}
\caption{ECG dataset. Proximal projection versus two-stage $\ell_{1,\infty}$ clipping operator — sparsity–accuracy trade-off.}
\label{ECG}
\end{figure}

Fig \ref{ECG} shows that, on this biomedical ECG dataset, the proposed
two-stage $\ell_{1,\infty}$ clipping operator and the proximal method
achieve very similar classification accuracy, with a difference of
less than $1\%$.

\paragraph{The sepsis dataset}
The Sepsis dataset \footnote{https://www.kaggle.com/datasets/olagokeblissman/sepsis-dataset} focuses on early warning detection for sepsis, a critical and potentially fatal medical condition. It includes anonymized vital signs, lab results, and clinical indicators of patients admitted to the hospital, structured for real-time monitoring and predictive modeling.  The total number of patients were 40,336 including 2 932 sepsis
 patients and 37,404 patients non-sepsis. We performed balanced sampling at the patient level. All SEPSIS patients were retained and $ 7.27 \%$ of the non-sepsis patients were randomly selected, using a fixed random seed to ensure reproducibility. For the considered dataset, this resulted in approximately 2,932 SEPSIS patients and 2,719 non-sepsis patients, yielding a nearly balanced dataset with a 48-hour temporal sequence length $Tokens = 48)$ and 
40 clinical variables per hour.

\begin{figure}[ht]
\centering
\includegraphics[width=0.9\textwidth]{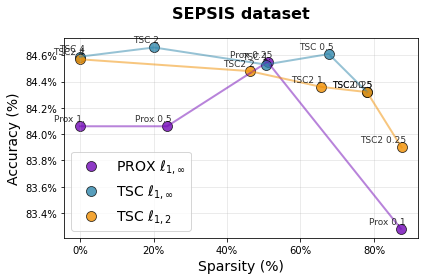}
\caption{SEPSIS dataset. Proximal projection versus two-stage $\ell_{1,\infty}$ clipping operator — sparsity–accuracy trade-off.}
\label{SEPSIS}
\end{figure}

Fig \ref{SEPSIS} shows that, on this biomedical SEPSIS dataset, the proposed
two-stage $\ell_{1,\infty}$ clipping operator and the proximal method
achieve very similar classification accuracy, with a difference of
less than $1\%$.

\subsection{Attention Sparsity–Reconstruction Trade-off (ASRT)}

Let define the optimisation problem on transformer architectures.
Let $\mX$ be the original image, and $\mX^* (w)$  the reconstructed image,
\begin{equation}
 \label{crit-image}
\boxed{
\begin{aligned}
\min_{W}\quad & \phi\bigl(\mX,\mX^*(W)\bigr) \qquad
\text{subject to}\quad & \|W\|_{1,\infty}\leq \eta.
\end{aligned}
}
\end{equation}
 
where $\phi(\mX,\mX^*(W))$ is the quadratic loss. 
The mixed-norm constraint is imposed on the sparsified
core of each attention matrix, excluding the first row
and column, with an adaptive radius
$\eta = r\|A_c\|_{1,\infty}$, where $A_c$ denotes the
original attention submatrix before sparsification
and $0 < r \leq 1$.

In our implementation, the sparsification operator is
applied independently to the post-softmax attention
matrix of each Transformer layer and attention head.

The TSC projection is applied to each mini-batch during every forward pass, and therefore multiple times per epoch.

The first row and column are preserved to maintain
global attention connections, while TSC is applied
to the remaining attention sub-matrix using an
adaptive radius proportional to its mixed norm.

The training loss is the mean squared
reconstruction error.\\
The objective of this experiment is to demonstrate that
TSC-based attention sparsification preserves image reconstruction
quality while increasing attention sparsity, without significantly
affecting the compression bit-rate.
A comprehensive investigation of image compression using
sparse Transformers is left for future work.

We compare the proposed TSC method with the widely used
structured pruning approach \cite{structured}.
We do not include BigBird in this comparison, as it was
primarily designed for large attention matrices.
Furthermore, the proximal projection method is excluded
because of its prohibitive computational cost on
large-scale datasets.

Reconstruction quality is evaluated using the peak
signal-to-noise ratio (PSNR). 
Sparsity levels are matched based on the nominal sparsity parameters while effective attention sparsity is
measured as the proportion of zero-valued attention
coefficients.

Both experiments employ the same Transformer-based
image reconstruction architecture. 
The architectural parameters are summarized in
Table~\ref{archi} in Appendix~\ref{compression}.

\subsubsection{Evaluation of ASRT on the
Kodak image dataset }

We evaluate the proposed structured sparsification operators on the
Kodak image dataset using a lightweight Transformer-based image
autoencoder. The dataset is randomly partitioned into disjoint training
and test subsets, containing $N_{\mathrm{train}}$ and $N_{\mathrm{test}}$
images, respectively, using a fixed random seed.

The resulting sparse model is subsequently evaluated on previously
unseen images using the same operator and constraint.\\

\begin{figure}[ht]
\centering
\includegraphics[width=0.7\textwidth]{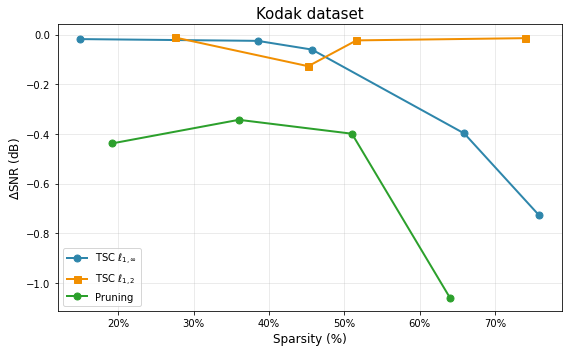}
\caption{Kodak dataset. Two-stage $\ell_{1,\infty}$ clipping operator versus Two-stage $\ell_{1,2}$ clipping operator and structured pruning: sparsity–Delta-PSNR trade-off.}
\label{Kodak}
\end{figure}
The figure shows the PSNR loss as a function of attention sparsity.
The figure \ref{Kodak} shows that TSC-based methods preserve
PSNR better than structured pruning at comparable
sparsity levels. In particular, TSC-$\ell_{1,2}$
exhibits the best reconstruction performance.

\subsubsection{Evaluation of ASRT using Flickr for training and kodak for testing }
We trained the transformer on 473 $2048 \times
2048 $images obtained from Flickr, divided into $256 \times 256$
patches. The dataset contains 473 images, with 30,272 available patches. At each epoch, 3,784 patches are sampled, corresponding to 8 patches per image. Over 30 epochs, this amounts to 113,520 parameter updates, assuming that each patch constitutes a separate batch. We use the 24-image Kodak dataset for testing. We report Sparsity–Delta-PSNR trade-off. 

\begin{figure}[ht]
\centering
\includegraphics[width=0.7\textwidth]{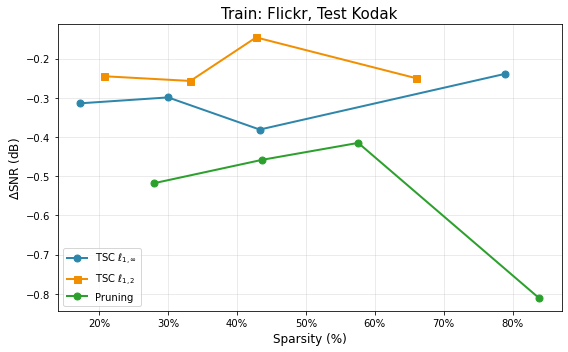}
\caption{Training on Flickr and testing on Kodak dataset. Two-stage $\ell_{1,\infty}$ clipping operator versus Two-stage $\ell_{1,2}$ clipping operator and structured pruning: sparsity–Delta-PSNR trade-off.}
\label{Flickr}
\end{figure}

The figure \ref{Flickr} shows that TSC-based methods preserve
PSNR better than structured pruning at comparable
sparsity levels. In particular, TSC-$\ell_{1,2}$
exhibits the best reconstruction performance.

\section{Discussion}

We could also consider extending TSC to the $\ell_{1,1}$ constraint,
which preserves the same matrix norm identity. However, this extension
presents three limitations:
(i) it induces unstructured sparsity within columns;
(ii) we cannot establish a quantitative error bound relating TSC
to the exact proximal projection;
and (iii) the additional $\ell_1$ projections on individual columns
introduce a logarithmic computational overhead.\\
Future work will focus on extending the proposed TSC framework to image compression and comparing its performance with standard codecs, such as JPEG 2000, as well as recent Transformer-based image compression methods \cite{Ma2026}.

\section{Conclusion}
Classical $\ell_{1,\infty}$ projection methods are typically derived from proximal formulations resulting in algorithms with $\mathcal{O}(nm \log(nm))$ worst-case complexity. \\
In contrast, our approach avoids computing the projection via dual proximal mappings. 
In this paper, we introduce the Two-Stage Clipping (TSC) framework
for enforcing structured sparsity constraints.
The proposed framework provides two complementary approaches:
for $\ell_{1,\infty}$, TSC offers a computationally efficient
approximation of the exact Euclidean projection, supported by
a quantitative error bound, whereas for $\ell_{1,2}$,
TSC coincides exactly with the Euclidean projection.

For square attention matrices ($n \times n$),
the complexity of our TSC method is $O(n^2)$, which is linear in the
number of matrix coefficients.

Experiments show that our two-stage clipping algorithm outperforms the proximal
algorithm in a range of 143 to 554 when the size of the attention
matrix varies within a range of 1000 to 10000.
By exploiting the intrinsic structure of the constraint, the proposed approach connects convex analysis with practical neural network sparsification, enabling scalable structured regularization in modern deep learning models.

\bibliography{references}

\newpage
\appendix
\section{Proximal $(1,\infty)$ algorithm}
\label{Bejar}

Recall the proximal projection algorithm:

\begin{algorithm}[ht]
\caption{Proximal $\ell_{1,\infty}$ Projection }
\label{Prox}
\begin{algorithmic}[1] 
    \STATE $Y \in \sR^{n \times m}_+, \eta > 0$
    \STATE $X = P_{\ell_{1,\infty}}(Y)$
    \STATE $\mathbf{a} \gets \{1,\dots,m\}$
    \STATE $\theta \gets \frac{\sum_j \max y_j-\eta}{m}$
    \WHILE{$\theta$ changed}
        \FOR{$j \in \mathbf{a}$}
            \IF{$\|y_{j}\|_1 < \theta$} \label{alg:ProjC:naive:remove}
                \STATE $\mathbf{a} \gets \mathbf{a}\setminus \{j\}$
                \STATE \textbf{continue}
            \ENDIF
            \STATE $\mathbf{x}_j \gets P_{1}^\theta(y_j)$
            \STATE $S_j \gets \{ i \mid x_{i,j} > 0\}$ \label{alg:ProjC:naive:gather}
        \ENDFOR
        \STATE $\theta \gets \frac{\sum_{j\in\mathbf{a}} \frac{\sum_{i \in S_j} Y_{i,j}}{|S_j|} - \eta} {\sum_{j\in \mathbf{a}}\frac{1}{|S_j|}}$
    \ENDWHILE
    \STATE $\forall j, \mu_j \gets \max(0,\frac{\sum_{i \in S_j} Y_{i,j} - \theta}{|S_j|})$
    \STATE $\forall i,j, X_{i,j} \gets \min(Y_{i,j},\mu_j)$
\end{algorithmic}
\end{algorithm}

\section{Relationship between $\ell_{1,\infty}$  Euclidean Projection and Two-Stage Clipping}
\begin{proof}
Let
\[
\mathcal C_\eta
=
\left\{X:\|X\|_{1,\infty}\leq\eta\right\}.
\]
Since $P$ is the Euclidean projection of $Y$ onto the closed convex set
$\mathcal C_\eta$, its variational characterization gives
\[
\langle Y-P,X-P\rangle_F\leq0,
\qquad
\forall X\in\mathcal C_\eta.
\]
Taking $X=TSC\in\mathcal C_\eta$ yields
\[
\langle Y-P,TSC-P\rangle_F\leq0.
\]
Using
\[
Y-TSC=(Y-P)-(TSC-P),
\]
we obtain
\[
\|Y-TSC\|_F^2
\geq
\|Y-P\|_F^2+\|TSC-P\|_F^2,
\]
which proves~\eqref{eq:distance_TSC_P}.
\end{proof}

\newpage


\section{Attention sparsity-reconstruction trade-off (ASRT)
evaluation framework}
\label{compression}
\begin{table}[ht]
\centering
\caption{Transformer architecture shared by the Kodak-to-Kodak
and Flickr-to-Kodak experiments.}
\label{archi}
\begin{tabular}{lc}
\hline
\textbf{Architecture parameter} & \textbf{Value} \\
\hline
Input image channels             & 3 (RGB) \\
Training image size              & $256 \times 256$ \\
Patch size                       & $16 \times 16$ \\
Number of tokens                 & 256 \\
Embedding dimension              & 256 \\
Number of Transformer layers     & 4 \\
Number of attention heads        & 8 \\
Dimension per attention head     & 32 \\
Attention matrix size (per head) & $256 \times 256$ \\
Feed-forward hidden dimension    & 1024 \\
Feed-forward activation          & GELU \\
Normalization                    & LayerNorm (Pre-LN) \\
Residual connections             & Yes \\
Latent dimension                 & 64 \\
Encoder                          & Patch embedding (Conv2D) \\
Decoder                          & Transposed convolution (ConvTranspose2D) \\
Latent quantization step         & $1/32$ \\
\hline
\end{tabular}
\end{table}

\begin{figure}[t]
\centering
\begin{tikzpicture}[
    >=Latex,
    node distance=0.65cm,
    block/.style={
        draw,
        rounded corners,
        align=center,
        minimum height=0.85cm,
        text width=3.5cm,
        font=\small
    },
    arrow/.style={->, thick}
]

\node[block] (input)
    {Input Image $X$};

\node[block, below=of input] (embed)
    {Patch Embedding};

\node[block, below=of embed] (attention)
    {Multi-Head Self-Attention\\
     TSC / Structured Pruning};

\node[block, below=of attention] (encoder)
    {Transformer Encoder};

\node[block, below=of encoder] (latent)
    {Latent Representation $Z$};

\node[block, below=of latent] (quant)
    {Latent Quantization $Q(Z)$};

\node[block, below=of quant] (decoder)
    {Image Decoder};

\node[block, below=of decoder] (output)
    {Reconstructed Image $\widehat{X}$};

\draw[arrow] (input) -- (embed);
\draw[arrow] (embed) -- (attention);
\draw[arrow] (attention) -- (encoder);
\draw[arrow] (encoder) -- (latent);
\draw[arrow] (latent) -- (quant);
\draw[arrow] (quant) -- (decoder);
\draw[arrow] (decoder) -- (output);

\node[block, right=1.2cm of attention] (sparsity)
    {Attention Sparsity $s$};

\draw[arrow] (attention.east) -- (sparsity.west);

\node[block, right=1.2cm of output] (quality)
    {Reconstruction Quality\\
     $\Delta PSNR(s)$, $\Delta SSIM(s)$};

\draw[arrow] (output.east) -- (quality.west);

\end{tikzpicture}

\caption{
Vertical architecture of the attention sparsity--reconstruction
trade-off (ASRT) evaluation framework.
Attention sparsification is performed using TSC or structured
pruning within the Transformer encoder.
Reconstruction quality is evaluated on the Kodak dataset
using PSNR and SSIM relative to the dense baseline.
No entropy coding is considered.
}
\end{figure}

\end{document}